\documentclass{article}

\PassOptionsToPackage{numbers, compress}{natbib}
%

\usepackage{natbib}
\usepackage[margin=1in]{geometry}



\usepackage[utf8]{inputenc} 
\usepackage[T1]{fontenc}    
\usepackage{hyperref}       
\usepackage{url}            
\usepackage{booktabs}       
\usepackage{amsfonts}       
\usepackage{nicefrac}       
\usepackage{microtype}      

\usepackage{amsmath,amsfonts,graphpap,amscd,mathrsfs,graphicx,lscape}
\newcommand*{\arginf}{\mathop{\mathrm{arginf}}}
\newcommand*{\argsup}{\mathop{\mathrm{argsup}}}
\newcommand*{\argmax}{\mathop{\mathrm{argmax}}}
\newcommand*{\argmin}{\mathop{\mathrm{argmin}}}

\usepackage{algorithmic}
\usepackage{algorithm}

\newtheorem{theorem}{Theorem}
\newtheorem{defn}{Definition}
\newtheorem{lemma}[theorem]{Lemma}

\newcommand{\Exp}[2]{{{\rm E}_{#1}\left[{#2}\right]}}

\newcommand{\dotp}[2]{\langle #1, #2 \rangle}

\newcommand{\Pols}{\Pi}
\newcommand{\pdist}{{q}}
\newcommand{\pdistt}{{\pdist_t}}

\newcommand{\cost}{{c}}
\newcommand{\ccost}{{c}}
\newcommand{\ccostt}{{\ccost_t}}

\newcommand{\qed}{\hfill\rule{2.5mm}{2.5mm}}

\newcommand{\rad}{\epsilon}
\newcommand{\vrad}{{\bf \rad}}

\newcommand{\Rel}{\textsc{Rel}}
\newcommand{\Reg}{\textsc{Reg}}

\renewcommand{\vec}[1]{\ensuremath{{\bf #1}}}

\newenvironment{rtheorem}[3][]{
\bigskip
\noindent \ifthenelse{\equal{#1}{}}{\bf #2 #3}{\bf #2 #3 (#1)}
\begin{it}
}{\end{it}}

\newcommand{\mytitle}{
Improved Regret Bounds for Oracle-Based Adversarial Contextual Bandits
}

\title{\mytitle}

\author{
Vasilis Syrgkanis\\
Microsoft Research
\and
Haipeng Luo\\
Princeton University
\and
Akshay Krishnamurthy\\
Microsoft Research
\and
Robert E. Schapire\\
Microsoft Research
}

\begin{document}

\maketitle

\begin{abstract}
  We give an oracle-based algorithm for the adversarial contextual bandit problem, where either contexts are drawn i.i.d. or the sequence of contexts is known a priori, but where the losses are picked adversarially. Our algorithm is computationally efficient, assuming access to an offline optimization oracle, and enjoys a regret of order $O((KT)^{\frac{2}{3}}(\log N)^{\frac{1}{3}})$, where $K$ is the number of actions, $T$ is the number of iterations and $N$ is the number of baseline policies. Our result is the first to break the $O(T^{\frac{3}{4}})$ barrier that is achieved by recently introduced algorithms.  Breaking this barrier was left as a major open problem. Our analysis is based on the recent relaxation based approach of~\citet{Rakhlin2016}.
\end{abstract}

\section{Introduction}

We study online decision making problems where a learner chooses an action based on some side information (context) and incurs some cost for that action with a goal of incurring minimal cost over a sequence of rounds.
These \emph{contextual online learning} settings form a powerful framework for modeling many important decision-making scenarios with applications ranging from personalized health care in the medical domain to content recommendation and targeted advertising in internet applications.
Many of these applications also involve a partial feedback component wherein costs for alternative actions are unobserved, and are typically modeled as \emph{contextual bandits}.

The contextual information present in these problems enables learning of a much richer \emph{policy} for choosing actions based on context.
In the literature, the typical goal for the learner is to have cumulative cost that is not much higher then the best policy $\pi$ in a large set $\Pi$.
This is formalized by the notion of \emph{regret}, which is the learner's cumulative cost minus the cumulative cost of the best fixed policy $\pi$ in hindsight.
Achieving this goal requires learning a rich policy, depending on $\Pi$.

Naively one can view the contextual problem as a standard online learning problem where the set of possible ``actions'' available at each iteration is the set of policies.
This perspective is fruitful, as classical algorithms, such as Hedge \cite{Freund1997,cesa1997use} and Exp4~\cite{Auer1995}, give information theoretically optimal regret bounds of $O(\sqrt{T\log(|\Pi|)})$ in full-information and $O(\sqrt{TK\log(|\Pi|)}$ in the bandit setting, where $T$ is the number of rounds, $K$ is the number of actions, and $\Pi$ is the policy set.
However, naively lifting standard online learning algorithms to the contextual setting leads to a running time that is linear in the number of policies.
Given that the optimal regret is only logarithmic in $|\Pi|$ and that our high-level goal is to learn a very rich policy, we want to capture policy classes that are exponentially large.
When we use a large policy class, existing algorithms are no longer computationally tractable.

To study this computational question, a number of recent papers have developed \emph{oracle-based algorithms} that only access the policy class through an optimization oracle for the offline full-information problem. 
Oracle-based approaches harness the research in supervised learning that focuses on designing efficient algorithms for full-information problems and uses it for online and partial-feedback problems.
Optimization oracles have been used in designing contextual bandit algorithms~\cite{Agarwal2014,langford2008epoch,dudik2011efficient} that achieve the optimal $O(\sqrt{KT\log(|\Pi|)})$ regret while also being computationally efficient (i.e. requiring $\textrm{poly}(K, \log(\Pi), T)$ oracle calls and computation).
However, these results only apply when the contexts and costs are drawn at random and identically and independently at each iteration, contrasting with the computationally inefficient approaches that can handle adversarial inputs.

Two very recent works provide the first oracle efficient algorithms for the contextual bandit problem in adversarial settings~\cite{Rakhlin2016,Syrgkanis2016}.
Rakhlin and Sridharan~\cite{Rakhlin2016} considers a setting where the contexts are drawn i.i.d. from a known distribution with adversarial costs and they provide an oracle efficient algorithm with $O(T^{\frac{3}{4}}K^{\frac{1}{2}}(\log(|\Pi|))^{\frac{1}{4}})$ regret. 
Their algorithm also applies in the transductive setting where the sequence of contexts is known a priori.
Srygkanis et. al~\cite{Syrgkanis2016} also obtain a $T^{\frac{3}{4}}$-style bound with a different oracle-efficient algorithm, but in a setting where the learner knows only the set of contexts that will arrive.
Both of these results achieve very suboptimal regret bounds, as the dependence on the number of iterations is far from the optimal $O(\sqrt{T})$-bound.
A major open question posed by both works is whether the $O(T^{\frac{3}{4}})$ barrier can be broken. 

In this paper, we provide an oracle-based contextual bandit algorithm that achieves regret $O((KT)^{\frac{2}{3}}(\log(|\Pi|))^{\frac{1}{3}})$ in both the i.i.d. context and the transductive settings considered by Rakhlin and Sridharan~\cite{Rakhlin2016}.
This bound matches that of the epoch-greedy algorithm of Langford and Zhang~\cite{langford2008epoch} that only applies to the fully stochastic setting.
As in Rakhlin and Sridharan~\cite{Rakhlin2016}, our algorithm only requires access to a \emph{value oracle}, which is weaker than the standard argmax oracle, and it makes $K+1$ oracle calls per iteration.
To our knowledge, this is the best regret bound achievable by an oracle-efficient algorithm for any adversarial contextual bandit problem.


Our algorithm and regret bound are based on a novel and intricate analysis of the minimax problem that arises in the relaxation-based framework of Rakhlin and Sridharan~\cite{Rakhlin2016}.
Our proof requires analyzing the value of a sequential game where the learner chooses a distribution over actions and then the adversary chooses a distribution over costs in some bounded finite domain, with - importantly - a bounded variance.
This is unlike the simpler minimax problem analyzed in \cite{Rakhlin2016}, where the adversary is only constrained by the range of the costs. 
We provide this tighter minimax analysis in Section~\ref{sec:regret-bound}.

Apart from showing that this more structured minimax problem has a small value, we also need to derive an oracle-based strategy for the learner that achieves the improved regret bound.
The additional constraints on the game require a much more intricate argument to derive this strategy which is an algorithm for solving a structured two-player minimax game. 
We present this part in Section \ref{sec:efficiency}.

\section{Model and Preliminaries}
\label{sec:set-up}

\paragraph{Basic notation.} Throughout the paper we denote with $x_{1:t}$ a sequence of quantities $\{x_1,\ldots,x_t\}$ and with $(x,y,z)_{1:t}$ a sequence of tuples $\{(x_1,y_1,z_1),\ldots\}$. $\emptyset$ denotes an empty sequence. The vector of ones is denoted by $\vec{1}$ and the vector of zeroes is denoted by $\vec{0}$. Denote with $[K]$ the set $\{1,\ldots,K\}$ and $\Delta_U$ the set of distributions over a set $U$. 
We also use $\Delta_K$ as a shorthand for $\Delta_{[K]}$.

\paragraph{Contextual online learning.} We consider the following version of the contextual online learning problem. On each round $t=1,\ldots,T$, the learner observes a context $x_t$ and then chooses a probability
distribution $\pdistt$ over a set of $K$ actions. The adversary then chooses a cost vector $c_t\in [0,1]^K$. The learner picks an action $\hat{y}_t$ drawn from distribution $q_t$, suffers a loss $c_t(\hat{y}_t)$ and observes only $c_t(\hat{y}_t)$ and not the loss of the other actions. 

Throughout the paper we will assume that the context $x_t$ at each iteration $t$ is drawn i.i.d. from a distribution $D$. This is referred to as the \emph{hybrid i.i.d.-adversarial} setting~\cite{Rakhlin2016}. As in prior work~\cite{Rakhlin2016}, we assume that the learner can sample contexts from this distribution as needed. It is easy to adapt the arguments in the paper to apply for the transductive setting where the learner knows the sequence of contexts that will arrive. The cost vectors $c_t$ are chosen by a non-adaptive adversary. 

The goal of the learner is to compete with a set of policies $\Pi$, where each policy $\pi\in \Pi$ is a function mapping from the set of contexts to the set of actions. The cumulative expected regret with respect to the best fixed policy in hindsight is
\[
 \Reg = \sum_{t=1}^T \pdistt\cdot \ccostt - \inf_{\pi\in\Pols}\sum_{t=1}^T \cost_t(\pi(x_t))~.
\]

\paragraph{Optimization value oracle.} We will assume that we are given access to an optimization oracle that when given as input a sequence of contexts and loss vectors $(x,c)_{1:t}$, it outputs the value of the cumulative loss of the best fixed policy: i.e.
\begin{equation}\label{eqn:oracle}
\inf_{\pi \in \Pi} \sum_{\tau=1}^t c_t(\pi(x_t))~.
\end{equation}
This can be viewed as an offline batch optimization or ERM oracle.

\subsection{Relaxation based algorithms}
We briefly review the relaxation based framework proposed in \citep{Rakhlin2016}. The reader is directed to \cite{Rakhlin2016} for a more extensive exposition. We will also slightly augment the framework by adding some internal random state that the algorithm might keep and use subsequently and which does not affect the cost of the algorithm. 

A crucial concept in the relaxation based framework is the information obtained by the learner at the end of each round $t\in [T]$, which is the following tuple:
\begin{equation*}
I_t(x_t,q_t,\hat{y}_t,c_t,S_t)=(x_t,q_t,\hat{y}_t,c_t(\hat{y}_t),S_t)
\end{equation*}
where $\hat{y}_t$ is the realized chosen action drawn from the distribution $q_t$ and $S_t$ is some random string drawn from some distribution that can depend on $q_t$, $\hat{y}_t$ and $c_t(\hat{y}_t)$ and which can be used by the algorithm in subsequent rounds.

\begin{defn} A partial-information relaxation $\Rel(\cdot)$ is a function that maps $(I_1,\ldots,I_t)$ to a real value for any $t\in [T]$. A partial-information relaxation is admissible if for any $t\in [T]$, and for all $I_1,\ldots,I_{t-1}$:
\begin{equation}\label{eqn:cond1}
\Exp{x_t}{\inf_{q_t}\sup_{c_t}\Exp{\hat{y}_t\sim q_t, S_t}{c_t(\hat{y}_t)+\Rel(I_{1:t-1},I_t(x_t,q_t,\hat{y}_t,c_t,S_t))}}\leq \Rel(I_{1:t-1})
\end{equation}
and for all $x_{1:T},c_{1:T}$ and $q_{1:T}$:
\begin{equation}\label{eqn:cond2}
\Exp{\hat{y}_{1:T}\sim q_{1:T},S_{1:T}}{\Rel(I_{1:T})}\geq -\inf_{\pi\in \Pi}\sum_{t=1}^T c_t(\pi(x_t))~.
\end{equation}
\end{defn}

\begin{defn} Any randomized strategy $q_{1:T}$ that certifies inequalities \eqref{eqn:cond1} and \eqref{eqn:cond2} is called an admissible strategy.
\end{defn}

A basic lemma proven in \cite{Rakhlin2016} is that if one constructs a relaxation and a corresponding admissible strategy, then the expected regret of the admissible strategy is upper bounded by the value of the relaxation at the beginning of time. 

\begin{lemma}[\cite{Rakhlin2016}]\label{lem:regret-bound} Let $\Rel$ be an admissible relaxation and $q_{1:T}$ be an admissible strategy. Then for any $c_{1:T}$,
we have
\begin{equation*}
\Exp{}{\Reg}\leq \Rel(\emptyset)~.
\end{equation*}
\end{lemma}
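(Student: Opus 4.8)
The plan is to unfold the recursive definition of admissibility one round at a time, using condition \eqref{eqn:cond1} as an inductive step and condition \eqref{eqn:cond2} as the base case. First I would observe that, since $q_{1:T}$ is an admissible strategy, it certifies \eqref{eqn:cond1}; in particular, for every $t$ and every history $I_{1:t-1}$, the specific choice $q_t$ made by the strategy satisfies
\[
\Exp{x_t}{\sup_{c_t}\Exp{\hat{y}_t\sim q_t,S_t}{c_t(\hat{y}_t)+\Rel(I_{1:t-1},I_t(x_t,q_t,\hat{y}_t,c_t,S_t))}}\leq \Rel(I_{1:t-1}),
\]
because the infimum over $q_t$ in \eqref{eqn:cond1} is witnessed (or arbitrarily nearly witnessed) by the strategy's choice. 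For a fixed adversarial sequence $c_{1:T}$, each actual cost vector $c_t$ is no larger (pointwise in the relevant expectation) than the $\sup_{c_t}$, so we may drop the supremum and keep the true $c_t$.

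Next I would set up the telescoping argument. Define, for each $t$, the partial sum of realized expected costs plus the relaxation evaluated at the current history: informally, $\Phi_t = \sum_{\tau=1}^{t} q_\tau\cdot c_\tau + \Rel(I_{1:t})$ (with the understanding that the $q_\tau\cdot c_\tau$ term is the conditional expectation $\Exp{\hat y_\tau\sim q_\tau, S_\tau}{c_\tau(\hat y_\tau)}$). The key step is to show $\Exp{}{\Phi_t}\leq \Exp{}{\Phi_{t-1}}$ for each $t\in[T]$, where the expectations are over all randomness (contexts $x_{1:t}$, draws $\hat y_{1:t}$, and random strings $S_{1:t}$) up to round $t$. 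This follows by conditioning on $I_{1:t-1}$, applying the displayed inequality above with the true $c_t$ substituted for the supremum, and then taking the outer expectation over $I_{1:t-1}$; the tower property of conditional expectation does the rest. Iterating from $t=T$ down to $t=1$ gives $\Exp{}{\Phi_T}\leq \Phi_0 = \Rel(\emptyset)$, since $\Phi_0$ has no cost terms and an empty history.

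Finally I would invoke condition \eqref{eqn:cond2}: $\Exp{\hat y_{1:T}\sim q_{1:T},S_{1:T}}{\Rel(I_{1:T})}\geq -\inf_{\pi\in\Pi}\sum_{t=1}^T c_t(\pi(x_t))$. Combining this with $\Exp{}{\Phi_T}\leq \Rel(\emptyset)$ and rearranging,
\[
\Exp{}{\sum_{t=1}^T q_t\cdot c_t}\;\leq\;\Rel(\emptyset)-\Exp{}{\Rel(I_{1:T})}\;\leq\;\Rel(\emptyset)+\Exp{}{\inf_{\pi\in\Pi}\sum_{t=1}^T c_t(\pi(x_t))},
\]
which is exactly $\Exp{}{\Reg}\leq\Rel(\emptyset)$ after moving the $\inf_\pi$ term to the left and recognizing the definition of $\Reg$.

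I expect the only real subtlety — not an obstacle so much as a point requiring care — to be the bookkeeping of which random variables are integrated at which stage: the random strings $S_t$ feed into later rounds' histories, so one must be careful that the conditioning in the telescoping step is on the full information tuple $I_{1:t-1}$ (including the $S_\tau$'s), and that the adversary is non-adaptive so that $c_t$ is a fixed function and the $\sup_{c_t}$ in \eqref{eqn:cond1} legitimately dominates the true $c_t$. Everything else is a routine tower-property telescartion. This is essentially the argument of Rakhlin and Sridharan~\cite{Rakhlin2016}, reproduced here with the added internal-state $S_t$ carried through the notation.
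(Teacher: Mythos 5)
Your argument is correct: the telescoping of $\Phi_t=\sum_{\tau\le t}q_\tau\cdot c_\tau+\Rel(I_{1:t})$ via condition \eqref{eqn:cond1} (with the strategy witnessing the infimum and the fixed non-adaptive $c_t$ dominated by the supremum), closed out by condition \eqref{eqn:cond2}, is exactly the standard admissibility-to-regret argument of Rakhlin and Sridharan, which this paper invokes by citation rather than reproving. Your handling of the internal state $S_t$ and the conditioning on the full tuples $I_{1:t-1}$ is the right bookkeeping, so there is nothing to add.
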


We will utilize this framework and construct a novel relaxation with an admissible strategy. We will show that the value of the relaxation at the beginning of time is upper bounded by the desired improved regret bound and that the admissible strategy can be efficiently computed assuming access to an optimization value oracle.

\section{A Faster Contextual Bandit Algorithm}\label{sec:regret-bound}
First we define an unbiased estimator for each loss vector $c_t$.
In addition to doing the usual importance weighting, we also discretize the estimated loss to either $0$ or $L$ for some constant $L \geq K$ to be specified later.
Specifically, consider a random variable $X_t$ which we construct at the end of each iteration conditioning on $\hat{y}_t$:
\begin{align}
	X_t = \begin{cases}
	1 & \text{ with probability } \frac{c_t(\hat{y}_t)}{L q_t(\hat{y}_t)} ,\\
	0 & \text{ with the remainig probability } .
	\end{cases}\label{eqn:rv_X}
\end{align}
This is a valid random variable whenever $\min_i q_t(i)\geq \frac{1}{L}$, which will be ensured by the algorithm. This is the only random variable in the random string $S_t$ that we used in the general formulation of the relaxation framework.

Now the construction of an unbiased estimate for each $c_t$ based on the information $I_t$ collected at the end of each round is: $\hat{c}_t = L  X_t \vec{e}_{\hat{y}_t}$. Observe that for any $i\in [K]$:
\begin{equation*}
\Exp{\hat{y}_t\sim q_t, X_t}{\hat{c}_t(i)} = L\cdot \Pr[\hat{y}_t=i]\cdot\Pr[X_t=1|\hat{y}_t=i] = L \cdot q_t(i) \cdot \frac{c_t(i)}{L q_t(i)} = c_t(i)~.
\end{equation*}
Hence, $\hat{c}_t$ is an unbiased estimate of $c_t$.

We are now ready to define our relaxation. Let $\vrad_t\in \{-1,1\}^K$ be a Rademacher random vector (i.e. each coordinate is an independent Rademacher random variable, which is $-1$ or $1$ with equal probability), and let $Z_t\in \{0,L\}$ be a random variable which is $L$ with probability $K/L$ and $0$ otherwise. 
With the notation $\rho_t = (x,\epsilon,Z)_{t+1:T}$, our relaxation is defined as follows:
\begin{equation}
\Rel(I_{1:t}) = \Exp{\rho_t}{R((x,\hat{c})_{1:t},\rho_t)}\label{defn:relaxation}~, 
\end{equation}
where
\[ R((x,\hat{c})_{1:t},\rho_t) = -\inf_{\pi \in \Pols} \left(\sum_{\tau=1}^t \hat{\cost}_{\tau}(\pi(x_\tau)) + \sum_{\tau=t+1}^T 2\rad_{\tau}(\pi(x_\tau))Z_t\right) + (T-t)K/L~. \]
Note that $\Rel(\emptyset)$ is the following quantity, whose first part resembles a Rademacher average:
\begin{equation*}
{\cal R}_{\Pols} = 2\Exp{(x,\vrad,Z)_{1:T}}{\inf_{\pi \in \Pols} \sum_{\tau=t+1}^T \rad_{\tau}(\pi(x_\tau))Z_t}+TK/L~.
\end{equation*}
Using the following Lemma (whose proof is deferred to the supplementary material) and the fact $\Exp{}{Z_t^2} \leq KL$, we can upper bound ${\cal R}_{\Pols}$ by $O(\sqrt{TKL\log(N)}+TK/L)$,
which after tuning $L$ will give the claimed $O(T^{2/3})$ bound.

\begin{lemma}\label{lem:rademacher-bound}
Let $\epsilon_{t}$ be Rademacher random vectors, and $Z_t$ be non-negative real-valued random variables, such that $\Exp{}{Z_t^2}\leq M$. Then:
\begin{equation*}
\Exp{Z_{1:T},\epsilon_{1:T}}{\sup_{\pi\in \Pi} \sum_{t=1}^T \epsilon_t(\pi(x_t))\cdot Z_t}\leq \sqrt{2T M \log(N)}
\end{equation*}
\end{lemma}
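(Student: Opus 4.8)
The plan is to prove the lemma by the standard ``finite class maximal inequality'' argument, carried out conditionally on the scale variables $Z_{1:T}$ and then averaged over them. Throughout I treat the contexts $x_{1:T}$ as fixed, and I use the fact that the $Z_t$ are independent of the Rademacher vectors $\epsilon_{1:T}$ (as is the case in the application, where each $Z_t$ is an independent draw), so that conditioning on $Z_{1:T}$ leaves the $\epsilon_t$ as independent Rademacher vectors.

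First I would fix a realization of $Z_{1:T}$ and a single policy $\pi\in\Pi$, and observe that $\sum_{t=1}^T \epsilon_t(\pi(x_t)) Z_t$ is a sum of $T$ independent, mean-zero random variables whose $t$-th summand takes the values $\pm Z_t$ with equal probability. By Hoeffding's lemma each summand has moment generating function at most $\exp(\lambda^2 Z_t^2/2)$, so the whole sum is sub-Gaussian with variance proxy $\sigma^2 := \sum_{t=1}^T Z_t^2$; crucially, this proxy is the same for every policy, since it does not depend on which coordinate $\pi(x_t)$ of $\epsilon_t$ is selected.

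Next I would run the usual union-bound-in-the-exponent over the (at most) $N$ policies: for every $\lambda>0$, Jensen's inequality and $\exp(\lambda\sup_\pi S_\pi)=\sup_\pi\exp(\lambda S_\pi)\le\sum_\pi\exp(\lambda S_\pi)$ give
\[
\exp\!\Big(\lambda\, \mathbb{E}_{\epsilon}\big[\,\sup_{\pi}\sum_{t} \epsilon_t(\pi(x_t)) Z_t \;\big|\; Z_{1:T}\,\big]\Big)\;\le\; \sum_{\pi\in\Pi}\mathbb{E}_{\epsilon}\big[\exp\big(\lambda\sum_{t}\epsilon_t(\pi(x_t)) Z_t\big)\big]\;\le\; N\exp\!\big(\tfrac{\lambda^2}{2}\sigma^2\big).
\]
Taking logarithms, dividing by $\lambda$, and optimizing $\lambda=\sqrt{2\log N/\sigma^2}$ yields
\[
\mathbb{E}_{\epsilon}\big[\,\sup_{\pi}\sum_{t} \epsilon_t(\pi(x_t)) Z_t \;\big|\; Z_{1:T}\,\big]\;\le\; \sqrt{2\sigma^2\log N}\;=\;\sqrt{2\log N \sum_{t=1}^T Z_t^2}\,.
\]
Finally I would take the expectation over $Z_{1:T}$ and apply concavity of the square root (Jensen), linearity of expectation, and the hypothesis $\mathbb{E}[Z_t^2]\le M$, to get $\mathbb{E}_{Z_{1:T},\epsilon_{1:T}}[\sup_\pi\sum_t\epsilon_t(\pi(x_t))Z_t]\le\sqrt{2\log N\cdot\sum_{t=1}^T\mathbb{E}[Z_t^2]}\le\sqrt{2TM\log N}$, as claimed.

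There is no serious obstacle here; the only point requiring a little care is the coordinate-selection structure --- different policies read off possibly different entries of each $\epsilon_t$, so the $KT$ underlying Rademacher bits are shared across policies rather than there being one clean Rademacher per term. This is harmless because, as noted in the second paragraph, every induced sum $\sum_t\epsilon_t(\pi(x_t))Z_t$ has the same variance proxy $\sum_t Z_t^2$; equivalently, one may identify each $\pi$ with the vector $v^\pi\in\reals^{KT}$ given by $v^\pi_{t,a}=Z_t\1{\pi(x_t)=a}$, note that $\|v^\pi\|_2^2=\sum_t Z_t^2$ for all $\pi$, and invoke Massart's finite-class lemma directly rather than re-deriving it. One should also handle the boundary case $\sigma^2=0$ (all $Z_t=0$) separately, where both sides of the bound are $0$ and the $\lambda$-optimization is vacuous.
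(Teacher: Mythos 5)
Your proposal is correct and follows essentially the same argument as the paper: an exponential-moment (soft-max) bound over the $N$ policies combined with Hoeffding's bound $\Exp{\epsilon_t}{e^{\lambda \epsilon_t(\pi(x_t))Z_t}}\leq e^{\lambda^2 Z_t^2/2}$, conditionally on $Z_{1:T}$. The only cosmetic difference is that you optimize $\lambda$ conditionally on $Z_{1:T}$ (obtaining $\sqrt{2\log N\sum_t Z_t^2}$, i.e.\ Massart's finite-class lemma) and then apply Jensen's inequality to the square root, whereas the paper keeps a single deterministic $\lambda$, takes the expectation over $Z_{1:T}$ first, and optimizes $\lambda$ at the end; both routes give the identical bound $\sqrt{2TM\log N}$.
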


To show an admissible strategy for our relaxation, we next introduce some more notation.  Let 
\begin{equation*}
D = \{L\cdot\vec{e}_i: i\in [K]\}\cup \{\vec{0}\},
\end{equation*}
where $(\vec{e}_1,\ldots,\vec{e}_K)$ are the orthonormal basis vectors (i.e. $\vec{e}_i$ is $1$ at coordinate $i$ and zero otherwise), and $\vec{0}$ is the all zeros vector. We will denote with $\Delta_D$ the set of distributions over $D$. For a distribution $p\in \Delta(D)$, we will denote with $p(i)$, for $i\in \{0,\ldots,K\}$, the probability assigned to vector $\vec{e}_i$, with the convention that $\vec{e}_0=\vec{0}$.
Also let $\Delta'_D = \{ p \in \Delta_D : p(i) \leq 1/L,  \forall i \in [K] \}$.

Based on this notation our admissible strategy is defined as 
\begin{equation}\label{eqn:strategy}
q_t = \Exp{\rho_t}{q_t(\rho_t)}    \text{\quad where \quad} q_t(\rho_t) = \left(1-\frac{K}{L}\right) q_t^*(\rho_t)+\frac{1}{L}{\bf 1} 
\end{equation}
and
\begin{equation}\label{eqn:distt}
q_t^*(\rho_t) = \argmin_{q\in \Delta_K}\sup_{p_t \in \Delta'_D}\Exp{\hat{c}_t\sim p_t}{\dotp{q}{\hat{c}_t} + R((x,\hat{c})_{1:t}, \rho_t)}~.
\end{equation}
Algorithm \ref{alg:cont-experts} implements this admissible strategy.
Note that it suffices to use $q_t(\rho_t)$ instead of $q_t$ for a random draw $\rho_t$ in the algorithm to ensure the exact same guarantee in expectation.
Moreover, in Section~\ref{sec:efficiency} we will show that $q_t(\rho_t)$ can be computed efficiently using an optimization value oracle.

We now prove that our relaxation and strategy are indeed admissible.

\begin{algorithm}[t]
  \caption{A New Contextual Bandit Algorithm}
\label{alg:cont-experts}
\begin{algorithmic}
	\STATE {\bfseries Input:} parameter $L \geq K$
	\FOR{each time step $t\in [T]$}
   	\STATE Observe $x_t$. Draw $\rho_t = (x,\epsilon,Z)_{t+1:T}$ where each $x_\tau$ is drawn from the distribution of contexts, $\epsilon_\tau$ is a Rademacher random vectors and $Z_\tau \in \{0,L\}$ is $L$ with probability $K/L$ and $0$ otherwise.
	\STATE Compute $q_t(\rho_t)$ based on Eq.~\eqref{eqn:strategy} (using Algorithm~\ref{alg:q_t}).
	\STATE Predict $\hat{y}_t\sim q_t(\rho_t)$ and observe $c_t(\hat{y}_t)$.
	\STATE Create an estimate $\hat{c}_t = L X_t \vec{e}_{\hat{y}_t}$, where $X_t$ is defined in Eq.~\eqref{eqn:rv_X} with $q_t$ in that equation instantiated with $q_t(\rho_t)$.
	\ENDFOR
\end{algorithmic}
\end{algorithm}

\begin{theorem} The relaxation defined in Equation \eqref{defn:relaxation} is admissible. An admissible randomized strategy for this relaxation is given by \eqref{eqn:strategy}. The expected regret of the Algorithm~\ref{alg:cont-experts} is upper bounded by 
\begin{equation}\label{eqn:regret}
2\sqrt{2TKL\log(N)}+TK/L,
\end{equation}
for any $L \geq K$. 
Specifically, setting $L = \left( KT/\log(N) \right)^{\frac{1}{3}}$ when $T \geq K^2\log(N)$, the regret is of order $O((KT)^{\frac{2}{3}}(\log(N))^{\frac{1}{3}})$.
\end{theorem}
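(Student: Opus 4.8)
The plan is to prove that $\Rel$ together with the strategy $q_{1:T}$ of~\eqref{eqn:strategy} is admissible, then invoke Lemma~\ref{lem:regret-bound} to get $\Exp{}{\Reg}\le\Rel(\emptyset)={\cal R}_{\Pols}$, bound ${\cal R}_{\Pols}$ using Lemma~\ref{lem:rademacher-bound}, and finally tune $L$. Condition~\eqref{eqn:cond2} is the easy half: with $\rho_T$ empty, $\Rel(I_{1:T})=-\inf_{\pi}\sum_{\tau=1}^{T}\hat{\cost}_\tau(\pi(x_\tau))=\sup_\pi\sum_{\tau}(-\hat{\cost}_\tau(\pi(x_\tau)))$ is a supremum of functions linear in $(\hat{\cost}_{1:T})$, hence convex, so by Jensen's inequality together with unbiasedness of the $\hat{\cost}_\tau$'s, $\Exp{}{\Rel(I_{1:T})}\ge\sup_\pi\sum_\tau(-\cost_\tau(\pi(x_\tau)))=-\inf_\pi\sum_\tau\cost_\tau(\pi(x_\tau))$, which is~\eqref{eqn:cond2}.

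The work is in condition~\eqref{eqn:cond1}. The key first observation is a reparametrization of the adversary: for any $q_t$ with $\min_i q_t(i)\ge 1/L$ (which our $q_t$ satisfies, since $L\ge K$), as $\cost_t$ ranges over $[0,1]^K$ the law of $\hat{\cost}_t=LX_t\vec{e}_{\hat y_t}$ ranges over exactly $\Delta'_D$, via $p_t(i)=\cost_t(i)/L$, and moreover $\Exp{\hat y_t\sim q_t}{\cost_t(\hat y_t)}=\Exp{\hat{\cost}_t\sim p_t}{\dotp{q_t}{\hat{\cost}_t}}$. So the left side of~\eqref{eqn:cond1} equals $\Exp{x_t}{\inf_{q_t}\sup_{p_t\in\Delta'_D}\Exp{\hat{\cost}_t\sim p_t,\rho_t}{\dotp{q_t}{\hat{\cost}_t}+R((x,\hat{\cost})_{1:t},\rho_t)}}$. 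Substituting the particular choice $q_t=\Exp{\rho_t}{q_t(\rho_t)}$, using linearity to pull the average over $\rho_t$ out of the inner product, and using $\sup_{p_t}\Exp{\rho_t}{\cdot}\le\Exp{\rho_t}{\sup_{p_t}\cdot}$, this is at most $\Exp{x_t,\rho_t}{V(q_t(\rho_t))}$, where $V(q)=\sup_{p_t\in\Delta'_D}\Exp{\hat{\cost}_t\sim p_t}{\dotp{q}{\hat{\cost}_t}+R((x,\hat{\cost})_{1:t},\rho_t)}$. The uniform-exploration term $\tfrac1L\vec{1}$ in $q_t(\rho_t)$ costs only $K/L$: since $\hat{\cost}_t\ge\vec{0}$ and $\Exp{\hat{\cost}_t\sim p_t}{\dotp{\vec{1}}{\hat{\cost}_t}}=L\sum_{i\in[K]}p_t(i)\le K$ (the only use of $p_t(i)\le 1/L$ here), one gets $V(q_t(\rho_t))\le V(q_t^*(\rho_t))+K/L=\min_q V(q)+K/L$. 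Finally $V$ is a supremum of functions affine in $q$ over the compact convex set $\Delta'_D$, so the minimax theorem applies and $\min_q V(q)=(T-t)K/L+\sup_{p_t\in\Delta'_D}\bigl(L\min_i p_t(i)+\Exp{\hat{\cost}_t\sim p_t}{-\inf_\pi(A(\pi)+\hat{\cost}_t(\pi(x_t)))}\bigr)$, with $A(\pi)=\sum_{\tau=1}^{t-1}\hat{\cost}_\tau(\pi(x_\tau))+\sum_{\tau=t+1}^{T}2\rad_\tau(\pi(x_\tau))Z_\tau$ collecting everything not involving $\hat{\cost}_t$ or $q$.

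The heart of the argument --- the promised tighter minimax analysis --- is to show this quantity is at most $(T-t)K/L+\Exp{\rad_t,Z_t}{-\inf_\pi(A(\pi)+2\rad_t(\pi(x_t))Z_t)}$, i.e.\ that the Rademacher-with-$Z$ relaxation dominates the structured one-step game. I would argue in two steps. First, symmetrization: with an i.i.d.\ copy $\hat{\cost}_t'$ of $\hat{\cost}_t$ and $\mu=\Exp{p_t}{\hat{\cost}_t}$ (so $\mu(i)=Lp_t(i)$, hence $L\min_i p_t(i)=\min_i\mu(i)$), Jensen on the convex map $v\mapsto-\inf_\pi(A(\pi)+v(\pi(x_t)))$ gives $\Exp{\hat{\cost}_t}{-\inf_\pi(A(\pi)+\hat{\cost}_t(\pi(x_t)))}\le\Exp{\hat{\cost}_t,\hat{\cost}_t'}{-\inf_\pi(A(\pi)+\mu(\pi(x_t))+(\hat{\cost}_t-\hat{\cost}_t')(\pi(x_t)))}$; since $\min_i\mu(i)\le\mu(\pi(x_t))$ for every $\pi$, the $\min_i\mu(i)$ and $\mu(\pi(x_t))$ terms cancel, leaving $\sup_{p_t\in\Delta'_D}\Exp{\hat{\cost}_t,\hat{\cost}_t'}{\Phi(\hat{\cost}_t-\hat{\cost}_t')}$ with $\Phi(v):=-\inf_\pi(A(\pi)+v(\pi(x_t)))$ convex. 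Second, a comparison lemma: for every $p_t\in\Delta'_D$ and every convex $\Phi$, $\Exp{\hat{\cost}_t,\hat{\cost}_t'\sim p_t}{\Phi(\hat{\cost}_t-\hat{\cost}_t')}\le\Exp{\rad_t,Z_t}{\Phi(2\rad_tZ_t)}$; to prove it, expand $\hat{\cost}_t-\hat{\cost}_t'$ over its finite support $\{L\vec{e}_j-L\vec{e}_k,\ \pm L\vec{e}_j,\ \vec{0}\}$, use convexity to split each point into a $\tfrac12$-$\tfrac12$ mixture of $\pm 2L\vec{e}_j$'s and $\vec{0}$ (e.g.\ $\Phi(L\vec{e}_j-L\vec{e}_k)\le\tfrac12\Phi(2L\vec{e}_j)+\tfrac12\Phi(-2L\vec{e}_k)$), and collect coefficients; they telescope to the convex combination $(1-\sum_j\alpha_j)\Phi(\vec{0})+\sum_j\alpha_j\Exp{\sigma}{\Phi(2L\sigma\vec{e}_j)}$ with $\alpha_j=p_t(j)(1-p_t(j))$, $\sigma$ a scalar Rademacher, and crucially $\sum_j\alpha_j\le\sum_j p_t(j)\le K/L$. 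Since $\Phi(\vec{0})\le\Exp{\sigma}{\Phi(2L\sigma\vec{e}_j)}\le\Exp{\rad_t}{\Phi(2L\rad_t)}$ for each $j$ (Jensen over the remaining coordinates) and $\Exp{\rad_t,Z_t}{\Phi(2\rad_tZ_t)}=(1-\tfrac KL)\Phi(\vec{0})+\tfrac KL\Exp{\rad_t}{\Phi(2L\rad_t)}$, replacing every $\alpha_j$-weighted term by the larger $\Exp{\rad_t}{\Phi(2L\rad_t)}-\Phi(\vec{0})$ and using $\sum_j\alpha_j\le K/L$ finishes the lemma. Combining the two steps, and noting that $(T-t)K/L+K/L$ is exactly the constant of $R((x,\hat{\cost})_{1:t-1},\rho_{t-1})$ while $(x_t,\rad_t,Z_t,\rho_t)$ reassembles into $\rho_{t-1}$, the left side of~\eqref{eqn:cond1} is at most $\Exp{\rho_{t-1}}{R((x,\hat{\cost})_{1:t-1},\rho_{t-1})}=\Rel(I_{1:t-1})$, which proves admissibility.

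With admissibility established, Lemma~\ref{lem:regret-bound} gives $\Exp{}{\Reg}\le\Rel(\emptyset)={\cal R}_{\Pols}$, and since $\Exp{}{Z_t^2}=L^2(K/L)=KL$, Lemma~\ref{lem:rademacher-bound} yields ${\cal R}_{\Pols}\le 2\sqrt{2TKL\log N}+TK/L$, which is~\eqref{eqn:regret}. Choosing $L=(KT/\log N)^{1/3}$ --- legitimate (i.e.\ $L\ge K$) exactly when $T\ge K^2\log N$ --- makes both summands $\Theta((KT)^{2/3}(\log N)^{1/3})$, giving the claim. The main obstacle is the comparison lemma of the previous paragraph: it is the step where one must exploit that the adversary's induced cost distribution has variance controlled by $p_t(i)\le 1/L$ rather than merely bounded range, which is precisely the structural refinement of the minimax problem over the one in~\cite{Rakhlin2016}; the importance-weighting-plus-discretization construction is what makes this variance bound available.
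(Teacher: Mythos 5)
Your proof is correct, and its overall skeleton coincides with the paper's: verify \eqref{eqn:cond2} by Jensen and unbiasedness; for \eqref{eqn:cond1}, charge the $\tfrac1L\vec{1}$ exploration component $K/L$, pass from the adversary's choice of $c_t\in[0,1]^K$ to the induced law of $\hat c_t$ in $\Delta'_D$ (the paper only needs the containment $\Pr[\hat c_t=L\vec{e}_i]\le 1/L$, you note the correspondence is exact), pull $\Exp{\rho_t}{\cdot}$ outside the sup, apply the minimax theorem, symmetrize with a ghost copy $\hat c_t'$, and finally invoke Lemmas~\ref{lem:regret-bound} and~\ref{lem:rademacher-bound} with $\Exp{}{Z_t^2}=KL$ and tune $L$. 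Where you genuinely depart from the paper is the last, most delicate step: the paper handles the symmetrized quantity by introducing a random sign $\delta$, splitting the supremum to reach $\Exp{}{\sup_\pi(A_\pi+2\delta\hat c_t(\pi(x_t)))}$, constructing the auxiliary vector $M_t$ that spreads $\pm\max_i\hat c_t(i)$ over all coordinates, and then optimizing over the Bernoulli parameter $a=\Pr[Z_t=L]\le K/L$ (showing the sup sits at $a=K/L$ via $\sup_\pi A_\pi\le\Exp{\epsilon_t}{\sup_\pi(A_\pi+2\epsilon_t(\pi(x_t))L)}$). You instead prove a self-contained comparison lemma: for any convex $\Phi$ and $p_t\in\Delta'_D$, $\Exp{}{\Phi(\hat c_t-\hat c_t')}\le\Exp{\epsilon_t,Z_t}{\Phi(2\epsilon_tZ_t)}$, obtained by decomposing the law of $\hat c_t-\hat c_t'$ over its support, splitting each atom by convexity into $\pm2L\vec{e}_j$ and $\vec 0$, collecting the weights $\alpha_j=p_t(j)(1-p_t(j))$ with $\sum_j\alpha_j\le K/L$, and using two Jensen comparisons ($\Phi(\vec 0)\le\Exp{\sigma}{\Phi(2L\sigma\vec{e}_j)}\le\Exp{\epsilon_t}{\Phi(2L\epsilon_t)}$); I checked the coefficient bookkeeping and it does produce a convex combination, so the lemma is sound. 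This buys a cleaner, coupling-style argument that makes the use of the variance constraint $p_t(i)\le 1/L$ completely explicit and avoids both the $M_t$ device and the optimization over $a$ (the role of the paper's monotonicity-in-$a$ argument is played by your inequality $\Exp{\epsilon_t}{\Phi(2L\epsilon_t)}\ge\Phi(\vec 0)$); the paper's route, in turn, stays closer to standard Rademacher-symmetrization manipulations. The only cosmetic looseness is your claim of exact equality of the left side of \eqref{eqn:cond1} with the reparametrized inf-sup over all $q_t$ (for $q_t$ without the $1/L$ floor the estimator is not even defined), but since admissibility only requires substituting your specific strategy and bounding from above, this does not affect correctness.
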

\begin{proof}
We verify the two conditions for admissibility.
\paragraph{Final condition.}
It is clear that inequality \eqref{eqn:cond2} is satisfied since $\hat{c}_t$ are unbiased estimates of $c_t$:
\begin{align*}
\Exp{\hat{y}_{1:T}, X_{1:T}}{\Rel(I_{1:T})} =~& \Exp{\hat{y}_{1:T}, X_{1:T}}{\sup_{\pi \in \Pols}- \sum_{\tau=1}^T \hat{\cost}_{\tau}(\pi(x_\tau))} \\
\geq~& \sup_{\pi \in \Pols}-\Exp{\hat{y}_{1:T}, X_{1:T}}{ \sum_{\tau=1}^T \hat{\cost}_{\tau}(\pi(x_\tau))} 
=  \sup_{\pi \in \Pols}- \sum_{\tau=1}^T \cost_{\tau}(\pi(x_\tau))
\end{align*}

\paragraph{$t$-th Step condition.} We now check that inequality \eqref{eqn:cond1} is also satisfied at some time step $t\in [T]$. We reason conditionally on the observed context $x_t$ and show that $q_t$ defines an admissible strategy for the relaxation.
Let $q_t^* = \Exp{\rho_t}{q_t^*(\rho_t)} $. First observe that:
\begin{equation*}
\Exp{\hat{y}_t, X_t}{c_t(\hat{y}_t)} =\Exp{\hat{y}_t\sim q_t}{c_t(\hat{y}_t)}=  \dotp{q_t}{c_t}\leq \dotp{q_t^*}{c_t}+\frac{1}{L}\dotp{{\bf 1}}{c_t}\leq \Exp{\hat{y}_t, X_t}{\dotp{q_t^*}{\hat{c}_t}}+\frac{K}{L}
\end{equation*}
We remind that $\hat{c}_t=L X_t \vec{e}_{\hat{y}_t}$ is the unbiased estimate, which is a deterministic function of $\hat{y}_t$ and $X_t$.

Hence:
\begin{align*}
\sup_{c_t\in [0,1]^K} \Exp{\hat{y}_t, X_t}{c_t(\hat{y}_t)+\Rel(I_{1:t})}\leq  \sup_{c_t\in [0,1]^K} \Exp{\hat{y}_t, X_t}{\dotp{q_t^*}{\hat{c}_t}+\Rel(I_{1:t})}+\frac{K}{L}
\end{align*}
We now work with the first term of the right hand side:
\begin{align*}
\sup_{c_t\in [0,1]^K} \Exp{\hat{y}_t, X_t}{\dotp{q_t^*}{\hat{c}_t}+\Rel(I_{1:t})}=~& \sup_{c_t\in [0,1]^K} \Exp{\hat{y}_t, X_t}{\dotp{q_t^*}{\hat{c}_t}+\Exp{\rho_t}{R((x,\hat{c})_{1:t},\rho_t}}\\
=~&\sup_{c_t\in [0,1]^K} \Exp{\hat{y}_t, X_t}{\Exp{\rho_t}{\dotp{q_t^*(\rho_t)}{\hat{c}_t}+R((x,\hat{c})_{1:t},\rho_t)}}
\end{align*}
Observe that $\hat{c}_t$ is a random variable taking values in $D$ and such that the probability that it is equal to $L\vec{e}_i$ can be upper bounded as:
\begin{equation*}
\Pr[\hat{c}_t = L\vec{e}_i] = \Exp{\rho_t}{\Pr[\hat{c}_t = L\vec{e}_i|\rho_t]}= \Exp{\rho_t}{q_t(\rho_t)(i) \frac{c_t(i)}{L\cdot q_t(\rho_t)(i)}}\leq 1/L.
\end{equation*}
Thus we can upper bound the latter quantity by the supremum over all distributions in $\Delta'_D$, i.e.:
\begin{align*}
\sup_{c_t\in [0,1]^K} \Exp{\hat{y}_t,X_t}{\dotp{q_t^*}{\hat{c}_t}+\Rel(I_{1:t})}\leq~& \sup_{p_t\in \Delta'_D} \Exp{\hat{c}_t\sim p_t}{\Exp{\rho_t}{\dotp{q_t^*(\rho)}{\hat{c}_t}+R((x,\hat{c})_{1:t},\rho_t)}}
\end{align*}
Now we can continue by pushing the expectation over $\rho_t$ outside of the supremum, i.e.
\begin{align*}
\sup_{c_t\in [0,1]^K} \Exp{\hat{y}_t\sim q_t,X_t}{\dotp{q_t^*}{\hat{c}_t}+\Rel(I_{1:t})}\leq~& \Exp{\rho_t}{\sup_{p_t\in \Delta'_D} \Exp{\hat{c}_t\sim p_t}{\dotp{q_t^*(\rho_t)}{\hat{c}_t}+R((x,\hat{c})_{1:t},\rho_t)}}
\end{align*}
and working conditionally on $\rho_t$. Observe that by the definition of $q_t^*(\rho_t)$ the quantity inside the expectation is equal to:
\begin{equation*}
\inf_{q\in \Delta_K}\sup_{p_t \in \Delta'_D}\Exp{\hat{c}_t\sim p_t}{\dotp{q}{\hat{c}_t} + R((x,\hat{c})_{1:t},\rho_t)}
\end{equation*}
We can now apply the minimax theorem and upper bound the above by:
\begin{equation*}
\sup_{p_t \in \Delta'_D}\inf_{q\in \Delta_K}\Exp{\hat{c}_t\sim p_t}{\dotp{q}{\hat{c}_t} + R((x,\hat{c})_{1:t},\rho_t)}
\end{equation*}
Since the inner objective is linear in $q$, we continue with
\begin{equation*}
\sup_{p_t \in \Delta'_D}\min_{i}\Exp{\hat{c}_t\sim p_t}{\hat{c}_t(i) + R((x,\hat{c})_{1:t},\rho_t)}
\end{equation*}
We can now expand the definition of $R(\cdot)$:
\begin{equation*}
\sup_{p_t \in \Delta'_D}\min_{i}\Exp{\hat{c}_t\sim p_t}{\hat{c}_t(i) + \sup_{\pi \in \Pi}-\left(\sum_{\tau=1}^t \hat{\cost}_{\tau}(\pi(x_\tau)) + \sum_{\tau=t+1}^T 2\rad_{\tau}(\pi(x_\tau))Z_t\right)} + (T-t)K/L
\end{equation*}
With the notation
\begin{equation*}
A_{\pi} = -\sum_{\tau=1}^{t-1} \hat{\cost}_{\tau}(\pi(x_\tau)) - \sum_{\tau=t+1}^T 2\rad_{\tau}(\pi(x_\tau))Z_t
\end{equation*}
we re-write the above quantity as:
\begin{equation*}
\sup_{p_t \in \Delta'_D}\min_{i}\Exp{\hat{c}_t\sim p_t}{\hat{c}_t(i) + \sup_{\pi \in \Pi}(A_{\pi}-\hat{c}_t(\pi(x_t)))} + (T-t)K/L
\end{equation*}
We now upper bound the first term. The extra term $(T-t)K/L$ will be combined with the extra $K/L$ that we have abandoned to give the correct term $(T-(t-1))K/L$ needed for $\Rel(I_{1:t-1})$. 

Observe that we can re-write the first term by using symmetrization as:
\begin{align*}
&\sup_{p_t \in \Delta'_D}\min_{i}\Exp{\hat{c}_t\sim p_t}{\hat{c}_t(i) + \sup_{\pi \in \Pi}(A_{\pi}-\hat{c}_t(\pi(x_t)))}\\
&= \sup_{p_t \in \Delta'_D}\Exp{\hat{c}_t\sim p_t}{\sup_{\pi \in \Pi}(A_{\pi}+\min_{i} \Exp{\hat{c}_t'\sim p_t}{\hat{c}_t'(i)}-\hat{c}_t(\pi(x_t)))}\\
&\leq \sup_{p_t \in \Delta'_D}\Exp{\hat{c}_t\sim p_t}{\sup_{\pi \in \Pi}(A_{\pi}+\Exp{\hat{c}_t'\sim p_t}{\hat{c}_t'(\pi(x_t))}-\hat{c}_t(\pi(x_t)))}\\
&\leq \sup_{p_t \in \Delta'_D}\Exp{\hat{c}_t,\hat{c}_t'\sim p_t}{\sup_{\pi \in \Pi}(A_{\pi}+\hat{c}_t'(\pi(x_t))-\hat{c}_t(\pi(x_t)))}\\
&=\sup_{p_t \in \Delta'_D}\Exp{\hat{c}_t,\hat{c}_t'\sim p_t, \delta}{\sup_{\pi \in \Pi}(A_{\pi}+\delta\left(\hat{c}_t'(\pi(x_t))-\hat{c}_t(\pi(x_t)))\right)}\\
&\leq\sup_{p_t \in \Delta'_D}\Exp{\hat{c}_t\sim p_t, \delta}{\sup_{\pi \in \Pi}(A_{\pi}+2\delta\hat{c}_t(\pi(x_t)))}\\
\end{align*}
where $\delta$ is a random variable which is $-1$ and $1$ with equal probability. The last inequality follows by splitting the supremum into two equal parts.

Conditioning on $\hat{c}_t$, consider the random variable $M_t$ which is $-\max_i \hat{c}_t(i)$ or $\max_i \hat{c}_t(i)$ on the coordinates where $\hat{c}_t$ is equal to zero and equal to $\hat{c}_t$ on the coordinate that achieves the maximum. This is clearly an unbiased estimate of $\hat{c}_t$. Thus we can upper bound the last quantity by:
\begin{align*}
\sup_{p_t \in \Delta'_D}\Exp{\hat{c}_t\sim p_t, \delta}{\sup_{\pi \in \Pi}(A_{\pi}+2\delta\Exp{}{M_t(\pi(x_t))|\hat{c}_t})}
\leq
\sup_{p_t \in \Delta'_D}\Exp{\hat{c}_t\sim p_t, \delta,M_t}{\sup_{\pi \in \Pi}(A_{\pi}+2\delta M_t(\pi(x_t)))}
\end{align*}
The random vector $\delta M_t$, conditioning on $\hat{c}_t$, is equal to $-\max_i \hat{c}_t(i)$ or $\max_i \hat{c}_t(i)$ with equal probability independently on each coordinate. Moreover, observe that for any distribution $p_t\in \Delta'_D$, the distribution of the maximum coordinate of $\hat{c}_t$ has support on $\{0,L\}$ and is equal to $L$ with probability at most $K/L$. Since the objective only depends on the distribution of the maximum coordinate of $\hat{c}_t$, we can continue the upper bound with a supremum over any distribution of random vectors whose coordinates are $0$ with probability at least $1-K/L$ and otherwise are $-L$ or $L$ with equal probability. Specifically, let $\epsilon_t$ be a Rademacher random vector, we continue with:
\begin{align*}
\sup_{Z_t\in \Delta_{\{0,L\}}:Pr[Z_t=L]\leq K/L}\Exp{\epsilon_t,Z_t}{\sup_{\pi \in \Pi}(A_{\pi}+2\epsilon_t(\pi(x_t))Z_t)}
\end{align*}
Now observe that if we denote with $a=\Pr[Z_t=L]$, the above is equal to:
\begin{align*}
\sup_{a:0\leq a \leq K/L}\left((1-a)\sup_{\pi \in \Pi}(A_{\pi})+a\Exp{\epsilon_t}{\sup_{\pi \in \Pi}(A_{\pi}+2\epsilon_t(\pi(x_t))L)}\right)
\end{align*}
We now argue that this supremum is achieved by setting $a=K/L$. For that it suffices to show that:
\begin{equation*}
\sup_{\pi \in \Pi}(A_{\pi}) \leq\Exp{\epsilon_t}{\sup_{\pi \in \Pi}(A_{\pi}+2\epsilon_t(\pi(x_t))L)}
\end{equation*}
which is true by observing that with $\pi^*=\argsup_{\pi \in \Pi}(A_{\pi})$ one has:
\begin{equation*}
\Exp{\epsilon_t}{\sup_{\pi \in \Pi}(A_{\pi}+2\epsilon_t(\pi(x_t))L)}\geq \Exp{\epsilon_t}{A_{\pi^*}+2\epsilon_t(\pi^*(x_t))L)}=
A_{\pi^*}+\Exp{\epsilon_t}{2\epsilon_t(\pi^*(x_t))L)} = A_{\pi^*}~.
\end{equation*}

Thus we can upper bound the quantity we want by:
\begin{equation*}
\Exp{\epsilon_t,Z_t}{\sup_{\pi \in \Pi}(A_{\pi}+2\epsilon_t(\pi(x_t))Z_t}
\end{equation*}
where $\epsilon_t$ is a Rademacher random vector and $Z_t$ is now a random variable which is equal to $L$ with probability $K/L$ and is equal to $0$ with the remaining probability. 

Taking expectation over $\rho_t$ and $x_t$ and adding the $(T-(t-1))K/L$ term that we abandoned, we arrive at the desired upper bound of $\Rel(I_{1:t-1})$. This concludes the proof of admissibility.

\paragraph{Regret bound.} By applying Lemma \ref{lem:rademacher-bound} with $E[Z_t^2] = L^2 \Pr[Z_t=L]= KL$ and invoking Lemma \ref{lem:regret-bound}, we get the regret bound in Equation \eqref{eqn:regret}.
\qed\end{proof}

\section{Computational Efficiency}\label{sec:efficiency}

In this section we will argue that if one is given access to a value optimization oracle~\eqref{eqn:oracle}, then one can run Algorithm \ref{alg:cont-experts} efficiently. Specifically, we will show that the minimizer of Equation \eqref{eqn:distt} can be computed efficiently (see Algorithm~\ref{alg:q_t}).

\begin{algorithm}[t]
  \caption{Computing $q^*_t(\rho_t)$}
\label{alg:q_t}
\begin{algorithmic}
	\STATE {\bfseries Input:} a value optimization oracle, $(x, \hat{c})_{1:t-1}$, $x_t$ and $\rho_t$.
	\STATE {\bfseries Output:} $q \in \Delta_K$ as a solution of Eq.~\eqref{eqn:distt}.
	\STATE ~
	\STATE Compute $\psi_i$ as in Eq.~\eqref{eqn:psi} for all $i = 0, \ldots, K$ using the optimization oracle.
	\STATE Compute $\phi_i = \frac{\psi_i-\psi_0}{L}$ for all $i \in [K]$.
	\STATE Let $m = 1$ and $q = \vec{0}$.
	\FOR{each coordinate $i \in [K]$}
		\STATE Set $q(i) = \min\{(\phi_i)^+, m\}$.
		\STATE Update $m \leftarrow m - q(i)$.			
	\ENDFOR
	\STATE Distribute $m$ arbitrarily on the coordinates of $q$ if $m > 0$.	
\end{algorithmic}
\end{algorithm}

\begin{lemma}
Computing the quantity defined in equation \eqref{eqn:distt} for any given $\rho_t$ can be done in time $O(K)$ and with only $K+1$ accesses to a value optimization oracle. 
\end{lemma}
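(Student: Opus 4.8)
\emph{Plan.} I would compute the inner supremum in~\eqref{eqn:distt} in closed form for a fixed $\rho_t$; this collapses~\eqref{eqn:distt} into an elementary ``water-filling'' problem over the $K$ actions that the single linear pass of Algorithm~\ref{alg:q_t} solves, and it makes clear that the only use of the oracle is to precompute $K+1$ scalars.

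\emph{Setting up the $K+1$ oracle calls.} For a fixed $\rho_t$, every $p_t\in\Delta'_D$ is supported on $D=\{L\vec e_1,\dots,L\vec e_K,\vec 0\}$, so $R((x,\hat c)_{1:t},\rho_t)$ takes only $K+1$ values. For $i\in[K]$ let $\psi_i$ be the value returned by the oracle~\eqref{eqn:oracle} on the sequence $(x,\hat c)_{1:t-1}$, followed by $(x_t,L\vec e_i)$, followed by $(x_\tau,2Z_t\epsilon_\tau)$ for $\tau=t{+}1,\dots,T$ (reading $x_{t+1:T},\epsilon_{t+1:T},Z_t$ off $\rho_t$), and let $\psi_0$ be the same quantity with $(x_t,\vec 0)$ in the $t$-th slot; this is~\eqref{eqn:psi}, and it costs $K+1$ calls. (These queries use real-valued loss vectors, which is within the oracle's scope, or reducible to it by a global rescaling.) By construction $R=-\psi_i+(T{-}t)K/L$ when $\hat c_t=L\vec e_i$ (resp. $\hat c_t=\vec 0$ for $i=0$), and since appending the nonnegative loss $L\vec e_i$ only raises the infimum over $\Pi$, we have $\psi_i\ge\psi_0$, so $\phi_i:=(\psi_i-\psi_0)/L\ge 0$ for every $i\in[K]$.

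\emph{Reducing to water-filling.} Writing $p_t(i)$ for the mass on $L\vec e_i$ and $p_t(0)=1-\sum_{i\in[K]}p_t(i)$, the expectation inside~\eqref{eqn:distt} equals $\sum_{i\in[K]}p_t(i)\,L\,(q(i)-\phi_i)$ up to the additive constant $(T{-}t)K/L-\psi_0$, which does not involve $q$ or $p_t$. Here I use $L\ge K$: the caps $p_t(i)\le 1/L$ already force $\sum_{i\in[K]}p_t(i)\le K/L\le 1$, so the simplex constraint on $p_t$ is slack and the maximization over $p_t\in\Delta'_D$ decouples across coordinates, giving $\sup_{p_t\in\Delta'_D}\sum_{i\in[K]}p_t(i)L(q(i)-\phi_i)=\tfrac1L\sum_{i\in[K]}\bigl(L(q(i)-\phi_i)\bigr)^+=\sum_{i\in[K]}(q(i)-\phi_i)^+$. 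Hence~\eqref{eqn:distt} is exactly $\argmin_{q\in\Delta_K}\sum_{i\in[K]}(q(i)-\phi_i)^+$ — and note no minimax theorem is invoked, since this is the $\min_q\sup_{p_t}$ problem in its native order.

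\emph{Solving it, and the main obstacle.} Each summand $(q(i)-\phi_i)^+$ is $0$ on $q(i)\in[0,\phi_i]$ and then rises with unit slope, so the task is to pack the unit of probability mass into the capacities $\phi_1,\dots,\phi_K$: if $\sum_i\phi_i\ge 1$ the minimum is $0$, and otherwise it is $1-\sum_i\phi_i$ (with matching lower bound $\sum_i(q(i)-\phi_i)^+\ge\sum_i(q(i)-\phi_i)=1-\sum_i\phi_i$), attained by $q(i)=\phi_i$ together with an arbitrary placement of the residual mass $1-\sum_i\phi_i$. The loop of Algorithm~\ref{alg:q_t} ($q(i)\leftarrow\min\{\phi_i,m\}$; $m\leftarrow m-q(i)$; dump any leftover $m$) realizes this solution, and no sorting is needed because every coordinate carries the same unit marginal cost above its capacity. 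Forming $\phi_1,\dots,\phi_K$ from $\psi_0,\dots,\psi_K$ and running the pass is $O(K)$ work, which together with the $K+1$ oracle calls gives the claim. The one genuinely nontrivial step is the closed-form inner supremum — recognizing that $\hat c_t$ ranges over the $(K{+}1)$-point set $D$, that the box constraints of $\Delta'_D$ are the only active ones precisely because $L\ge K$, and hence that~\eqref{eqn:distt} reduces to $\sum_i(q(i)-\phi_i)^+$; everything after that is bookkeeping.
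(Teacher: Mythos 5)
Your proposal is correct and follows essentially the same route as the paper: compute $\psi_0,\dots,\psi_K$ with $K+1$ oracle calls as in \eqref{eqn:psi}, reduce \eqref{eqn:distt} to $\argmin_{q\in\Delta_K}\sum_{i=1}^K (q(i)-\phi_i)^+$, and solve that by the same $O(K)$ greedy water-filling pass of Algorithm~\ref{alg:q_t}. The only differences are cosmetic: you evaluate the inner supremum by eliminating $p_t(0)$ and noting that $L\geq K$ makes the simplex constraint redundant so the maximization over $\Delta'_D$ decouples coordinate-wise, whereas the paper sorts the quantities $z_i$ and pours mass $1/L$ down to $z_0$ (both yield the identical expression), and you additionally observe $\phi_i\geq 0$, which the paper sidesteps by using $(\phi_i)^+$.
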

\begin{proof}
For $i\in \{0,\ldots,K\}$, let:
\begin{equation}\label{eqn:psi}
\psi_i = \inf_{\pi\in \Pi} \left(\sum_{\tau=1}^{t-1} \hat{\cost}_{\tau}(\pi(x_\tau))+ L \vec{e}_i(\pi(x_t)) + \sum_{\tau=t+1}^T 2\rad_{\tau}(\pi(x_\tau))Z_t\right) 
\end{equation}
with the convention $\vec{e}_0 = \vec{0}$. Then observe that we can re-write the definition of $q_t^*(\rho_t)$ as:
\begin{equation*}
q_t^*(\rho_t) = \arginf_{q\in \Delta_K} \sup_{p_t \in \Delta'_D} \sum_{i=1}^K p_t(i)(L\cdot q(i) - \psi_i)- p_t(0) \cdot  \psi_0
\end{equation*}

Observe that each $\psi_i$ can be computed with a single oracle access. Thus we can assume that all $K+1$ $\psi$'s are computed efficiently and are given. We now argue how to compute the minimizer.

For each given $q$, the supremum over $p_t$ can be characterized as follows. With the notation $z_i=L\cdot q(i)-\psi_i$ and $z_0=-\psi_0$ we re-write the minimax quantity as: 
\begin{equation*}
q_t^*(\rho_t) = \arginf_{q\in \Delta_K} \sup_{p_t \in \Delta'_D} \sum_{i=1}^K p_t(i)\cdot z_i + p_t(0) \cdot z_0
\end{equation*}
Observe that if we didn't have the constraint that $p_t(i)\leq 1/L$ for $i>0$, then we would have put all the probability mass on the maximum of the $z_i$. However, now that we are constrained we will simply put as much probability mass as allowed on the maximum coordinate $\argmax_{i\in \{0,\ldots,K\}} z_i$ and continue to the next highest quantity. We repeat this until reaching the quantity $z_0$. At that point, the probability mass that we can put on coordinate $0$ is unconstrained. Thus we can put all the remaining probability mass on this coordinate. 

Let $z_{(1)},z_{(2)},\ldots,z_{(K)}$ denote the ordered $z_i$ quantities for $i>0$ (from largest to smallest). Moreover, let $\mu\in [K]$ be the largest index such that $z_{(\mu)}\geq z_0$. By the above reasoning we get that for a given $q$, the supremum over $p_t$ is equal to:
\begin{equation*}
\sum_{t=1}^{\mu} \frac{z_{(t)}}{L}+\left(1-\frac{\mu}{L}\right)z_0=
\sum_{t=1}^{\mu} \frac{z_{(t)}-z_0}{L} + z_0~.
\end{equation*}
Now since for any $t>\mu$, $z_{(t)}<z_0$, we can write the latter as:
\begin{equation*}
\sum_{t=1}^{\mu} \frac{z_{(t)}}{L}+\left(1-\frac{\mu}{L}\right)z_0=
\sum_{i=1}^{K} \frac{(z_{i}-z_0)^+}{L} + z_0
\end{equation*}
with the convention $(x)^+ = \max\{ x, 0\}$. We thus further re-write the minimax expression as:
\begin{equation*}
q_t^*(\rho_t) = \arginf_{q\in \Delta_K} \sum_{i=1}^{K} \frac{(z_{i}-z_0)^+}{L} + z_0 =  \arginf_{q\in \Delta_K} \sum_{i=1}^{K} \frac{(z_{i}-z_0)^+}{L}=
 \arginf_{q\in \Delta_K} \sum_{i=1}^{K} \left(q(i)-\frac{\psi_i-\psi_0}{L}\right)^+
\end{equation*}
Let $\phi_i = \frac{\psi_i-\psi_0}{L}$. The expression becomes: $q_t^*(\rho_t) =\arginf_{q\in \Delta_K}\sum_{t=1}^{K} (q(i)-\phi_i)^+$.

The latter is minimized as follows: consider any $i\in [K]$ such that $\phi_i\leq 0$. Then putting any positive mass on such a coordinate $i$ is going to lead to a marginal increase of $1$. On the other hand if we put some mass on an index $\phi_i>0$, then that will not increase the objective until we reach the point where $q(i)=\phi_i$. Thus a minimizer will distribute probability mass of $\min\{\sum_{i:\phi_i>0}\phi_i,1\}$, on the coordinates for which $\phi_i>0$. The remainder mass (if any) can be distributed arbitrarily. See Algorithm~\ref{alg:q_t} for details.
%
\qed\end{proof}

\section{Discussion}\label{sec:discussion}
In this paper, we present a new oracle-efficient algorithm for adversarial contextual bandits and we prove that it achieves $O((KT)^{2/3}\log(|\Pi|)^{1/3})$ regret in the settings studied by Rakhlin and Sridharan~\cite{Rakhlin2016}.
This is the best regret bound that we are aware of among oracle-based algorithms.

While our bound improves on the $O(T^{3/4})$ bounds in prior
work~\cite{Rakhlin2016,Syrgkanis2016}, achieving the optimal
$O(\sqrt{TK\log(|\Pi|)})$ regret bound with an oracle based approach
still remains an important open question.
Another interesting avenue for future work involves understanding the role of transductivity assumptions and developing an algorithm that can handle the non-transductive fully adversarial setting.
We look forward to pursuing these directions.

\begin{small}
\bibliographystyle{plainnat}
\bibliography{poa_survey}
\end{small}

\newpage

\appendix
\setcounter{page}{1}

\begin{center}
\bf \Large Supplementary material for \\ ``\mytitle''
\end{center}

\section{Supplementary Lemma}

\begin{rtheorem}{Lemma}{\ref{lem:rademacher-bound}.}
Let $\epsilon_{t}$ be Rademacher random vectors, and $Z_t$ be non-negative real-valued random variables, such that $\Exp{}{Z_t^2}\leq M$. Then:
\begin{equation*}
\Exp{Z_{1:T},\epsilon_{1:T}}{\sup_{\pi\in \Pi} \sum_{t=1}^T \epsilon_t(\pi(x_t))\cdot Z_t}\leq \sqrt{2T M \log(N)}
\end{equation*}
\end{rtheorem}
\begin{proof}
\begin{align*}
\Exp{Z_{1:T},\epsilon_{1:T}}{\sup_{\pi\in \Pi} \sum_{t=1}^T \epsilon_t(\pi(x_t))\cdot Z_t} =~&
\Exp{Z_{1:T}}{\frac{1}{\lambda}\Exp{\epsilon_{1:T}}{\log\left(\sup_{\pi\in \Pi} e^{\lambda\sum_{t=1}^T \epsilon_t(\pi(x_t))\cdot Z_t}\right)}}\\
\leq~&
\Exp{Z_{1:T}}{\frac{1}{\lambda}\log\left(\Exp{\epsilon_{1:T}}{\sup_{\pi\in \Pi} e^{\lambda\sum_{t=1}^T \epsilon_t(\pi(x_t))\cdot Z_t}}\right)}\\
\leq~&
\Exp{Z_{1:T}}{\frac{1}{\lambda}\log\left(\Exp{\epsilon_{1:T}}{\sum_{\pi\in \Pi} e^{\lambda\sum_{t=1}^T \epsilon_t(\pi(x_t))\cdot Z_t}}\right)}\\
=~&
\Exp{Z_{1:T}}{\frac{1}{\lambda}\log\left(\sum_{\pi\in \Pi} \Exp{\epsilon_{1:T}}{\prod_{t=1}^T e^{\lambda \epsilon_t(\pi(x_t))\cdot Z_t}}\right)}\\
=~&
\Exp{Z_{1:T}}{\frac{1}{\lambda}\log\left(\sum_{\pi\in \Pi} \prod_{t=1}^T\Exp{\epsilon_{t}}{ e^{\lambda \epsilon_t(\pi(x_t))\cdot Z_t}}\right)}
\end{align*}
Now observe that $\Exp{\epsilon_{t}}{ e^{\lambda \epsilon_t(\pi(x_t))\cdot Z_t}} = \frac{e^{\lambda \cdot Z_t}+e^{-\lambda \cdot Z_t}}{2}\leq e^{\lambda^2\cdot Z_t^2/2}$. Thus:
\begin{align*}
\Exp{Z_{1:T},\epsilon_{1:T}}{\sup_{\pi\in \Pi} \sum_{t=1}^T \epsilon_t(\pi(x_t))\cdot Z_t}
\leq~&
\Exp{Z_{1:T}}{\frac{1}{\lambda}\log\left(\sum_{\pi\in \Pi} \prod_{t=1}^T e^{\lambda^2\cdot Z_t^2/2}\right)}\\
=~&
\Exp{Z_{1:T}}{\frac{1}{\lambda}\log\left(N e^{\lambda^2\sum_{t=1}^T Z_t^2/2}\right)}\\
=~&
\frac{1}{\lambda}\log(N)+\lambda \Exp{Z_{1:T}}{\sum_{t=1}^T Z_t^2/2}\\
\leq~&
\frac{1}{\lambda}\log(N)+\lambda MT/2
\end{align*}
Optimizing over $\lambda$ yields the result.
\qed\end{proof}

\end{document}